\documentclass[letterpaper, 10 pt, conference]{ieeeconf}  

\IEEEoverridecommandlockouts                              

\usepackage{amsmath}
\usepackage{esint}
\usepackage{amssymb}
\usepackage{amsthm}                                                                          
\usepackage{mathtools}
\usepackage{derivative}
\usepackage{xcolor}
\usepackage{bm}
\usepackage{physics} 
\usepackage[ruled,vlined]{algorithm2e}
\usepackage{optidef}
\usepackage{graphicx}   
\usepackage{caption}    
\usepackage{subcaption} 

\usepackage[noadjust]{cite}
\usepackage{url}
\usepackage[hidelinks]{hyperref}

\usepackage{graphicx}
\usepackage[export]{adjustbox} 
\graphicspath{{figures/}}
\usepackage{stfloats}

\newtheorem{theorem}{Theorem}

\theoremstyle{definition}
\newtheorem{definition}{Definition}

\newtheorem{remark}{Remark}

\usepackage{bm}

\DeclareMathOperator*{\argmin}{arg\,min}

\title{\LARGE \bf
Full-Body Dynamic Safety for Robot Manipulators: \\ 
3D Poisson Safety Functions for CBF-Based Safety Filters
 }
\author{
Meg Wilkinson$^{1}$, Gilbert Bahati$^{2}$, Ryan M. Bena$^{2}$, Emily Fourney$^{1}$,
Joel W. Burdick$^{1,2}$, Aaron D. Ames$^{1,2}$%
\thanks{$^{1}$Computing and Mathematical Sciences, California Institute of Technology. $^{2}$Mechanical and Civil Engineering, California Institute of Technology.}%
\thanks{Email: {\tt\small \{mwilkins,gbahati,ryanbena,efourney,\protect\\
burdick,ames\}@caltech.edu}}}

\begin{document}

\maketitle
\thispagestyle{empty}
\pagestyle{empty}




\newcommand{\re}{\mathbb{R}}


\newcommand{\dt}{\mathrm{d}t}
\newcommand{\dy}{\mathrm{d}y}
\newcommand{\dx}{\mathrm{d}x}
\newcommand{\dtau}{\mathrm{d}\tau}
\newcommand{\Cc}{\mathcal{C}}
\newcommand{\Cce}{\mathcal{C}_\varepsilon}
\newcommand{\he}{h_{\varepsilon}}
\newcommand{\C}{\mathcal{C}}
\newcommand{\Ac}{\mathcal{A}}
\newcommand{\pCc}{\partial \mathcal{C}}
\newcommand{\Bc}{\mathcal{B}}
\newcommand{\Tc}{\mathcal{T}}
\newcommand{\Dc}{\mathcal{D}}
\newcommand{\Oc}{\Omega}
\newcommand{\Occ}{\overline{\Omega}}
\newcommand{\pOc}{\partial \Omega}
\newcommand{\Ocext}{\Oc_\mathrm{ext}}
\newcommand{\Ocint}{\Oc_\mathrm{int}}
\newcommand{\Hc}{\mathcal{H}}
\newcommand{\Kc}{\mathcal{K}}
\newcommand{\Fc}{\mathcal{F}}
\newcommand{\Mc}{\mathcal{M}}
\newcommand{\Nc}{\mathcal{N}}
\newcommand{\Pc}{\mathcal{P}}
\newcommand{\Uc}{\mathcal{U}}
\newcommand{\Sc}{\mathcal{S}}
\newcommand{\Xc}{\mathcal{X}}
\newcommand{\Yc}{\mathcal{Y}}
\newcommand{\Vc}{\mathcal{V}}
\newcommand{\Zc}{\mathcal{Z}}
\newcommand{\Lc}{\mathcal{L}}
\newcommand{\Rm}{\mathcal{\mathbb{R}}}
\newcommand{\R}{\mathcal{\mathbb{R}}}
\newcommand{\Sp}{\mathcal{\mathbb{S}}}

\newcommand{\divv}{\nabla \cdot \vec{\bv}}
\newcommand{\hs}{h_\mathrm{\Sc}}

\newcommand{\vr}{\varepsilon}
\newcommand{\nom}{{\operatorname{nom}}}
\newcommand{\m}{{\operatorname{min}}}
\newcommand{\des}{{\operatorname{des}}}
\newcommand{\on}{{\operatorname{on}}}
\newcommand{\off}{{\operatorname{off}}}
\newcommand{\fl}{{\operatorname{FL}}}
\newcommand{\Lie}{\mathcal{L}}
\newcommand{\qp}{{\operatorname{QP}}}

\newcommand{\ie}{i.e., }
\newcommand{\todo}[1]{{\color{cyan} Todo: #1}}

\newcommand{\ba}{\mathbf{a}}
\newcommand{\bb}{\mathbf{b}}
\newcommand{\be}{\mathbf{e}}
\renewcommand{\bf}{\mathbf{f}} 
\newcommand{\bff}{\mathbf{f}}
\newcommand{\bg}{\mathbf{g}}
\newcommand{\bk}{\mathbf{k}}
\newcommand{\bp}{\mathbf{p}}
\newcommand{\bq}{\mathbf{q}}
\newcommand{\bu}{\mathbf{u}}
\newcommand{\bv}{\mathbf{v}}
\newcommand{\bvv}{\vec{\mathbf{v}}}
\newcommand{\bn}{\mathbf{n}}
\newcommand{\hbn}{\hat{\mathbf{n}}}

\newcommand{\bx}{\mathbf{x}}
\newcommand{\bz}{\mathbf{z}}
\newcommand{\br}{\mathbf{r}}
\newcommand{\bA}{\mathbf{A}}
\newcommand{\bB}{\mathbf{B}}
\newcommand{\bD}{\mathbf{D}}
\newcommand{\bC}{\mathbf{C}}
\newcommand{\bF}{\mathbf{F}}
\newcommand{\bJ}{\mathbf{J}}
\newcommand{\bG}{\mathbf{G}}
\newcommand{\bK}{\mathbf{K}}
\newcommand{\bP}{\mathbf{P}}
\newcommand{\bW}{\mathbf{W}}
\newcommand{\bw}{\mathbf{w}}
\newcommand{\bd}{\mathbf{d}}
\newcommand{\bvy}{\vec{\by}}
\newcommand{\bty}{\tilde{\by}}
\newcommand{\bbeta}{\boldsymbol{\eta}}
\newcommand{\mb}[1]{\mathbf{#1}}

\newcommand{\bY}{\mathbf{Y}}
\newcommand{\by}{\mathbf{y}}
\newcommand{\byobs}{\mathbf{y}_\mathrm{obs}}

\newcommand{\bxd}{\bx_\mathrm{d}}
\newcommand{\bxobs}{\bx_\mathrm{obs}}
\newcommand{\md}{\mathrm{d}}

\newcommand{\Uxd}{U_{\mathrm{d}}}
\newcommand{\Uobs}{U_{\mathrm{obs}}}
\newcommand{\Uapf}{U_{\mathrm{APF}}}

\newcommand{\GradUxd}{\nabla U_{\mathrm{d}}}
\newcommand{\GradUobs}{\nabla U_{\mathrm{obs}}}
\newcommand{\GradUapf}{\nabla U_{\mathrm{APF}}}

\newcommand{\cmax}{c_\mathrm{max}}
\newcommand{\cmin}{c_\mathrm{min}}

\newcommand{\hn}{h_\mathrm{n}}
\newcommand{\Dhn}{D h_\mathrm{n}}
\newcommand{\Dh}{D h}
\newcommand{\Dhd}{D h_\mathrm{d}}

\begin{abstract}

Collision avoidance for robotic manipulators requires enforcing full-body safety constraints in high-dimensional configuration spaces.
Control Barrier Function (CBF) based safety filters have proven effective in enabling safe behaviors, but enforcing the high number of constraints needed for safe manipulation leads to theoretic and computational challenges. 
This work presents a framework for full-body collision avoidance for manipulators in dynamic environments by leveraging 3D Poisson Safety Functions (PSFs).
In particular, given environmental occupancy data, we sample the manipulator surface at a prescribed resolution and shrink free space via a Pontryagin difference according to this resolution. On this buffered domain, we synthesize a globally smooth CBF by solving Poisson's equation---yielding a single safety function for the entire environment.
This safety function, evaluated at each sampled point, yields task-space CBF constraints enforced by a real-time safety filter  via a multi-constraint quadratic program. 
We prove that keeping the sample points safe in the buffered region guarantees collision avoidance for the entire continuous robot surface. The framework is validated on a 7-degree-of-freedom manipulator in dynamic environments.




\end{abstract}

\section{INTRODUCTION}

Robotic manipulators operating in unstructured and dynamic environments must avoid collisions---not just at the end-effector, but over the entire robot body. This full-body requirement is fundamental, yet achieving provably safe full-body collision avoidance in real time remains challenging.  
This is primarily due to the fact that the controller must operate in high-dimensional joint spaces, while enforcing a prohibitively large number of safety constraints---ensuring collision avoidance between every point on the robot and the environment. Planning safe manipulator trajectories around dynamic obstacles has been well explored \cite {doi:10.1177/027836498500400308}, 
but there remains computational and theoretical bottlenecks in achieving dynamic safety in unknown \textit{a priori} environments.


Control Barrier Functions (CBFs) \cite{ADA-JWG-PT:14} offer an attractive framework for this problem due to their efficient online computational benefits. Given a safe set that encodes collision-free configurations, a CBF-based safety filter minimally modifies a nominal control input by solving a quadratic program (QP) to guarantee forward invariance of the safe set. This reactive approach---which adjusts commands only when necessary and preserves the nominal behavior otherwise---has been applied to manipulators \cite{SingletaryMolnarSafetyCriticalFood}, and a variety of robotic platforms \cite{bahati2025control, TamasRAL22, grandia2020nonlinear,cohen2025safety, Khazoom22}. For manipulators specifically, \cite{SingletaryMolnarSafetyCriticalFood} demonstrated real-time CBF-based trajectory modification for food preparation, and proved that safety guarantees established at the kinematic level extend to the full-order dynamics under accurate velocity tracking.
Yet this work introduced conservatism to account for non-smoothness of the CBF-based safety filter.  

\begin{figure}[t!]
\centering
    \includegraphics[width=0.98\linewidth]{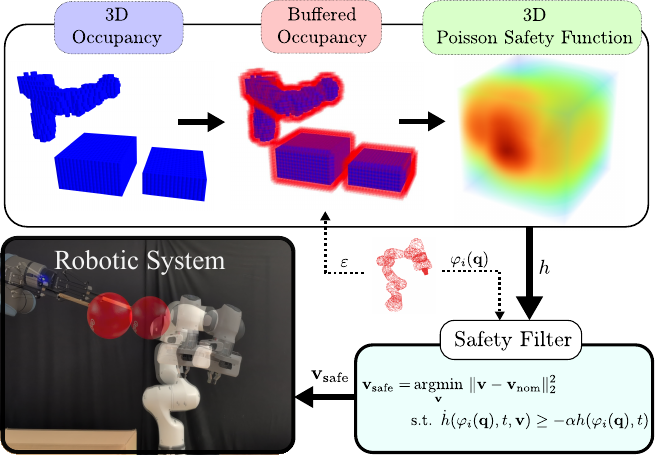}
    \caption{\small Full-body collision avoidance with Poisson-based CBF Safety Filters.  An occupancy map is buffered by the sampling resolution $\varepsilon$. Solving Poisson's equation on this domain generates a globally smooth safety function. A multi-constraint CBF-QP is composed of a single PSF evaluated at the sample points, enabling full body collision avoidance in dynamic environments.  }
    \label{fighero_fig}
    \vspace{-6mm}
\end{figure}
This paper addresses two challenges that arise when extending CBFs to full-body manipulator safety. The first challenge is that CBF-based safety constraints must be continuously differentiable (\ie ideally smooth) as the CBF-QP requires well-defined gradients at every point where a constraint is evaluated. Signed distance functions (SDFs), while geometrically natural, lose differentiability within safe regions where their gradients become ill-conditioned. Recent work has addressed this problem through post-processing methods that yield smooth approximations of SDFs \cite{SingletaryMolnarSafetyCriticalFood, zhao2026sq}, or through operational-space formulations that scale to hundreds of analytically defined constraints \cite{morton2025safe}. However, these approaches require ad-hoc geometric primitives (spheres, rectangles, ellipses) that over-approximate the true robot or obstacle geometry, introducing conservatism, and require a separate barrier construction for each obstacle. 

The second challenge is that enforcing safety at finitely many body points does not, by itself, guarantee safety for the continuous surface between those points---a gap that existing multi-constraint CBF formulations \cite{SingletaryMolnarSafetyCriticalFood, zhao2026sq, Khazoom22} leave unaddressed. While \cite{ding_2024_online} provides probabilistic bounds on safety, it does not provide full body safety guarantees. CBFs defined independently for each safety objective or sample point can yield a large number of constraints. Enforcing all these constraints in the QP can quickly result in infeasibility. A solution used in practice is to introduce a slack variable to the safety constraints and penalizing these violations heavily in the objective function. While relaxation improves numerical feasibility and performance, it weakens the forward invariance guarantees. Learning based pipelines also lack these guarantees, but yield impressive success rates in static and dynamic environments  \cite{yang2025deep}. However, these can be prone to failing unpredictably out of the training distribution \cite{fishman2022motionpolicynetworks}. 



We present a framework for full-body
collision avoidance of manipulators in dynamic environments by addressing the above challenges simultaneously. For continuously differentiable safety constraints, we employ Poisson Safety Functions (PSFs) \cite{bahati2025dynamic}---functional safety representations obtained by solving an elliptic partial differential equation (PDE), Poisson's equation, on the obstacle-free region provided by occupancy data. Solving this PDE produces a single globally smooth safety function for the entire environment, whose zero level set is aligned to obstacle surfaces by construction. Unlike SDFs, PSFs are smooth everywhere in the interior (by elliptic regularity \cite{gilbarg1977elliptic}), and unlike primitive-based methods, they represent arbitrary obstacle geometry without over-approximation. Furthermore, PSFs allow for temporal parameterizations to account for dynamic environments \cite{bena2025geometry, erina2026} and semantic-aware control synthesis \cite{bahati2025risk}.

To achieve full-body safety, we propose a sampling-and-buffering method with provable full-body safety guarantees by sampling the manipulator surface, and shrinking the safe region via a Pontryagin difference according to the sampling resolution, and prove that keeping the sample points safe in the buffered region implies safety for the entire continuous surface. This unified safety representation of the environment avoids conflicting constraints and eliminates infeasibility issues in the QP, allowing for a high number of hard safety constraints. %
Our contributions are summarized as follows:

\begin{enumerate}
    \item \textit{PSF-based CBF safety filter for 
    $n$-DOF manipulators}:
    We show how a single smooth task-space PSF, evaluated at finitely many body points, achieves real-time safety on $n$-DOF manipulators in dynamic scenarios.

    \item \textit{Formal guarantee for full-body collision avoidance:}
    We prove that sampling the robot surface at a prescribed resolution and buffering the obstacles by that resolution is sufficient to  guarantee forward invariance of the full robot body. 

    \item \textit{Hardware validation.}
    We demonstrate this pipeline on hardware on a 7-degree-of-freedom manipulator in dynamic environments.
\end{enumerate}



\section{Safety With Control Barrier Functions}
\subsection{Background: Control Barrier Functions}
Consider the nonlinear control-affine system of the form:
\begin{align}\label{eq: nl}
    \dot{\bx} = \bf(\bx) + \bg(\bx) \bu,
\end{align}
where $\bx \in \re^n$ is the system's state, $\bu \in \re^m$ is the control input, and $\bf:\re^n \rightarrow \re^n$, $\bg:\re^n \rightarrow \re^{n \times m}$ are the drift and actuation terms, respectively, both assumed to be locally Lipschitz continuous. A locally Lipschitz continuous control law $\bk: \re^n \times \re_{+ }\rightarrow \re^m$ yields a closed loop system $ \dot{\bx} = \bf(\bx) + \bg(\bx) \bk(\bx,t)$ for which, given an initial condition $\bx(0) = \bx_0$, admits a unique solution $t \mapsto \bx(t)$, which we assume exist for all $t >0$ \cite{perko2013differential}.


Our goal is to construct a controller that keeps the trajectories $t \mapsto \bx(t)$ within a \textit{safe set} for all $t>0$. In manipulation tasks, the safe set is typically encoded by subsets of unoccupied space. In particular, we consider a time-varying safe set $\Cc_t \subset \re^n$ defined as the $0$-super level set of a continuously differentiable function $h:\re^n \times \re_+ \rightarrow \re$ defined as:
\begin{align}\label{eq: safe set}
    \Cc_t = \{ \bx \in \re^n : h(\bx,t) \geq 0
    \}.
\end{align}
Achieving safety implies keeping value of $h$ positive along the trajectories $t \mapsto \bx(t) \ \forall t>0$, formalized by the notion of \textit{forward invariance}. Control Barrier Functions (CBFs) provide a constructive tool for synthesizing controllers that enforce forward invariance of safe sets.

\begin{definition}(Control Barrier Functions~\cite{ADA-XX-JWG-PT:17}) Let $\Cc_t \subset \re^n$ be the time-varying $0$-super-level set of a continuously differentiable function  $h:\re^n \times \re_+ \rightarrow \re$ satisfying $\nabla h(\bx,t) \neq \mathbf{0}$ when $h(\bx,t) =0$. We call $h$ a time-varying Control Barrier Function (CBF) for \eqref{eq: nl} if there exists\footnote{\small A continuous function $\alpha : \re \to \re$ belongs to the extended class $\Kc_{\infty}^{e}$ if it is monotonically increasing, $\alpha(0) = 0$, $\lim_{s \to \infty} \alpha(s) = \infty$, and $\lim_{s \to -\infty} \alpha(s) = -\infty$.}
 $\alpha \in \Kc_{\infty}^{e}$ such that for all $(\bx,t) \in \re^n \times \re_+$:
\begin{align}\label{CBF condition}
    \sup_{\bu \in \re^m} \dot h(\bx,t,\bu) \stackrel{\Delta}{=} \nabla &h (\bx,t) \cdot (\bf(\bx) + \bg(\bx) \bu) \nonumber \\& + \frac{\partial h}{\partial t}(\bx,t) \geq - \alpha(h(\bx,t)).
\end{align}
\end{definition}

Given a nominal controller $\bk_\mathrm{nom}:\re^n \times \re_+ \rightarrow \re^m$, one standard way of constructing a safe controller is via the CBF-QP that adjusts $\bk_\mathrm{nom}$ to its nearest safe action:
\begin{align*}\label{eq: safety filter}
    \bk(\bx,t) = &\argmin_{\bu \in \re^m} &&\|\bu - \bk_{\mathrm{nom}}(\bx,t)\|_2^2 \tag{CBF-QP} \\
    & \quad 
 \mathrm{s.t.} && \dot h(\bx,t,\bu)   \geq - \alpha(h(\bx,t)).
\end{align*}

\subsection{Application to Manipulators}

Robotic manipulators are high degree of freedom (DOF) systems that require safety guarantees across their full 3D geometric structure. 
%
%
Consider an $n$-DOF manipulator whose state $\bx = (\mathbf{q}, \dot{\mathbf{q}})$ consists of configurations $\bq \in \re^n$ and joint velocities $\dot{\bq} \in \re^n$, governed by the dynamics:
\begin{align}
\label{eq: mechanical system}
M(\mathbf{q})\ddot{\mathbf{q}} + C(\mathbf{q}, \dot{\mathbf{q}})\dot{\mathbf{q}} +G(\mathbf{q}) = B\mathbf{u},
\end{align}
where $M(\mathbf{q}) \in \mathbb{R}^{n \times n}$, $C(\mathbf{q}, \dot{\mathbf{q}})\in \mathbb{R}^{n \times n}$, $G(\mathbf{q})\in \mathbb{R}^{n}$ denote the inertia matrix, Coriolis matrix, and gravity terms, respectively, $\mathbf{u} \in \mathbb{R}^m$ is the control input (\ie joint torques), and $B \in \re^{n \times n}$ is invertible, assuming full actuation.

Motivated by \cite{SingletaryMolnarSafetyCriticalFood}, we interpret safety-critical control at the \textit{kinematic} level. 
%
That is, we consider the system:
 \begin{align}\label{eq: velocity model}
     \dot{\bq} = \bv,
 \end{align}
wherein we assume direct control over the joint velocities via
the commanded velocity $\bv \in \re^n$. We design safe velocities for \eqref{eq: velocity model}, and, as verified in \cite{SingletaryMolnarSafetyCriticalFood}, these safety guarantees extend to the full dynamics \eqref{eq: mechanical system} provided the commanded velocity is
accurately tracked by a low-level controller.

Collision avoidance specifications for manipulators are typically described in the task space (\ie 3D Euclidean space $\re^3$) while the control input acts in the joint (configuration) space $\re^n$, related via a kinematic map $\varphi : \re^n \rightarrow \re^3$.
%
 %
For a point $\by \in \re^3$ on the manipulator body, the forward kinematics relate the joint and task space coordinates via  $\by = \varphi(\bq)$, yielding 
$\dot{\by} = \frac{\partial \varphi}{\partial \bq}(\bq) \dot{\bq} \coloneq J(\bq)\dot{\bq},
$
where $J(\bq)   \in \re^{3 \times n}$ is the corresponding Jacobian. 
Given a continuously differentiable function $h:\re^3 \times \re_+ \rightarrow \re$ whose $0$-superlevel set defines collision-free space,
%
we define a joint-space safety function by the composition 
%
%
%
$h(\varphi(\bq),t) \coloneqq h(\by,t)$ with corresponding safe set in configuration space:
\begin{align}\label{eq: q safe set}
\Cc_\bq(t) = \{ \bq \in \re^n \, | \, h(\varphi(\bq),t) \geq 0 \}.
\end{align}
The time derivative of $h$ is given by:
\begin{align}
\dot{h}(\varphi(\bq), t, \bv) &= \frac{\partial h}{\partial \by}(\varphi(\bq),t) \cdot J(\bq) \bv + \frac{\partial h}{\partial t}(\varphi(\bq),t)\label{eq: task space gradients}\\
&\stackrel{\Delta}{=}\frac{\partial h}{\partial \bq}(\bq,t) \cdot \bv + \frac{\partial h}{\partial t}(\bq,t).
\end{align}
%
%
%
Safe joint velocities are synthesized by solving a CBF-QP.
Since the robot is a continuous surface, directly enforcing safety at every point is computationally intractable. Instead, safety is typically enforced point-wise on a \textit{collection} of $N$ sample points on the robot body 
$\by_i = \varphi_i(\bq)$, $i \in \{1, \ldots, N\}$, each querying the barrier function $h$, yielding a multi-constraint safety filter:
%
%
\begin{align*}
{\bv}_\mathrm{safe} = &\argmin_{\bv \in \re^n} \quad \|{\bv}- {\bv}_{\nom}\|_2^2 \tag{MC-CBF-QP} \\
& \quad \mathrm{s.t.} \quad 
 \dot{h}(\varphi_i(\bq), t,\bv)  \geq - \alpha_i (h(\varphi_i(\bq),t)),
%
\end{align*}
for all $i \in \{1, \cdots, N\}$ and $\alpha_i \in \Kc^e_{\infty}$, where $\bv_\nom$ is a desired, nominal joint velocity command, and
$\frac{\partial h}{\partial \bq}(\bq,t)
=\frac{\partial h}{\partial \by}(\varphi_i(\bq),t) \cdot J_i(\bq)$. 

%
%

The above formulation exposes two requirements that are difficult to satisfy simultaneously. 
First, the \textit{single} task-space safety function $h$ must be continuously differentiable to guarantee well-defined gradients in \eqref{eq: task space gradients} for the QP, yet common representations such as signed distance functions (SDFs) lose differentiability in regions where the nearest obstacle is not unique~\cite{SingletaryMolnarSafetyCriticalFood}. Additionally, smooth ad-hoc barrier constructions (i.e., analytical expressions where ``obstacles” are described by implicit
geometric shapes, e.g., circles, ellipses, polygons) require a separate barrier function for each obstacle, and fail to accurately represent obstacles of arbitrary geometry. Their composition into a single $h$  struggles to generalize to complex environments without introducing excessive conservatism \cite{molnar2023composing}.
Second, the MC-CBF-QP enforces safety only at \textit{selected} points $\by_i$ on the robot surface---leaving the question of how to constructively select these points to guarantee full-body safety (\ie including unsampled points). 
The next subsection addresses the first requirement while Section III addresses the second.

\subsection{Occupancy-Based Safety with Poisson Safety Functions}
While CBFs provide a framework for synthesizing safe control actions, their success relies on the availability of a continuously differentiable function $h$ characterizing safety specifications. In practice, $h$ must be derived from enviromental data encoding obstacle occupancy information in task space.
Since sensors sample continuous spatial structures of the physical environment, an occupancy map is best understood as a discrete approximation of an underlying continuous representation, whose fidelity depends on the grid resolution.
Poisson Safety Functions \cite{bahati2025dynamic} bridge this gap by providing a smooth function $h_0$ constructed directly from the occupancy map, enabling the systematic synthesis of CBFs.
\begin{definition}(Poisson Safety Function \cite{bahati2025dynamic})
Given occupancy information, let $\Oc \subset \re^3$ denote the open, bounded, and connected free space with smooth boundary $\pOc$ corresponding to obstacle surfaces. We call $h_0:\re^3 \rightarrow \re$ a Poisson Safety Function if it is the unique solution to the Dirichlet problem for Poisson's equation:
\begin{gather}\label{eq: poisson's eq}
\!\!\!\left \{
    \begin{aligned}
        \frac{\partial^2 h_0}{\partial x^2}(\by) + \frac{\partial^2 h_0}{\partial y^2}(\by) + \frac{\partial^2 h_0}{\partial z^2}(\by) &= f(\by)&  \forall \by \in \Omega,\\
        h_0(\by) &= 0 &  \forall \by \in\partial \Omega, \\
    \end{aligned}
    \right.
\end{gather}
where $f: \Oc \rightarrow \re_{<0}$ is a prescribed forcing function.  
\end{definition}
%
The boundary condition in \eqref{eq: poisson's eq} places the $0$-level set on obstacle surfaces, and $f(\by) <0$ ensures $h_0(\by) >0$ in free-space. A smooth forcing function $f \in C^\infty(\overline{\Omega})$ yields a smooth solution $h_0 \in  C^\infty(\overline{\Omega})$ \cite{gilbarg1977elliptic}. As demonstrated in \cite{bahati2025dynamic}, $h_0$ is a CBF for single integrator systems \eqref{eq: velocity model}, \ie $h_0 \coloneq h$, and its smoothness enables CBF extensions for high-order systems. 
Notably, \eqref{eq: poisson's eq} naturally extends to dynamic environments with moving obstacles $\Omega =\Omega(t)$, resulting in a moving boundary value problem solved in real time yielding $t \mapsto h_0(\by, t)$ \cite{bena2025geometry}. 


 PSFs resolve the differentiability requirement in \eqref{eq: task space gradients} enabling the CBF condition to be enforced classically in the QP. That is, a single PDE solve produces a \textit{globally} smooth CBF $h_0$ for environments with obstacles of arbitrary geometry, without requiring a separate barrier construction for each obstacle. Thus, in the multi-constraint safety filter (MC-CBF-QP), the same $h_0$ is simply evaluated at each sample point $\by_i \mapsto h_0(\by_i,t) = h_0(\varphi_i(\bq),t)$.

\begin{figure*}
\centering
    \includegraphics[width=1\linewidth]{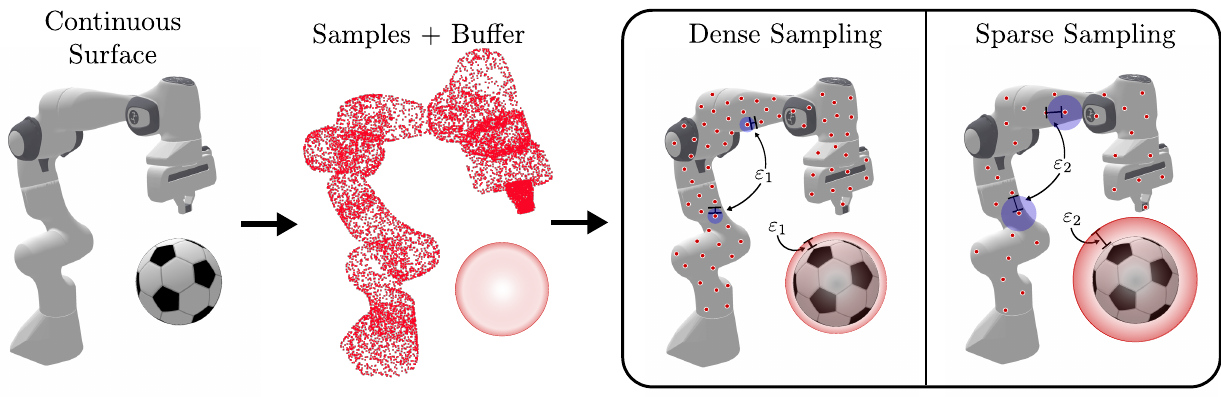}
    \caption{\small A dense point cloud offers an approximation of the robot's geometry. This is down-sampled via the Poisson Disk Algorithm to generate a sample set that contains all surface points in the union of $\varepsilon$ balls centered at each sample. The final two figures demonstrate the relationship between the sample radius $\varepsilon$ and the inflated buffered occupancy representation of an obstacle. }
    \label{pointcloud_to_sample}
    \vspace{-4mm}
    \label{fig: sampling and buffer main figure}
\end{figure*}

\section{Full-Body Safety For Robotic Manipulators}

Building on the foundations of CBF-based safety filters and PSFs, we now develop a constructive framework for full-body safety-critical manipulation in dynamic environments.
%
Specifically, we sample points on the manipulator surface at a specified resolution and buffer all obstacles in the environment according to that resolution. 
%
%
We prove that enforcing safety at the sampled points on the buffered map extends to the entire robot surface on the true (unbuffered) map, guaranteeing full-body collision avoidance with respect to the true obstacle boundaries.


\subsection{Sampling: Finite Approximation of the Robot Surface}

Since the robot presents a continuous surface, directly enforcing safety at every point on its body is computationally intractable. Instead, safety is enforced on a finite set of sample points, which serve as a discrete approximation of the robot surface.
To this end, let $\mathcal{S}(\bq) \subset \mathbb{R}^3$ denote the manipulator surface in task-space at configuration $\bq$, and let $\mathcal{Y} = \{\by_1, \dots, \by_N\} \subset \mathcal{S}(\bq)$ be a set of sample points approximating the surface at a sampling resolution $\varepsilon >0$.
To provide safety guarantees, we require the set $\Yc$ to satisfy a \textit{coverage} condition, where every point on the surface,  $\bp \in \mathcal{S}(\bq)$, lies within $\varepsilon$  of at least one sample point $\by_i \in \Yc$, expressed as:
%
%
%
\begin{align}\label{eq: coverage}
    \forall\, \bp \in \Sc(\bq), \ \ \min_{i \in \{1, \cdots, N\}} \| \bp - \by_i \| < \varepsilon.
\end{align}
Equivalently, \eqref{eq: coverage} implies the entire robot surface is contained in the union of $\varepsilon$-balls centered at each sample point.
\begin{align}
    \mathcal{S}(\bq) &\subset \bigcup_{i=1}^{N} B_\varepsilon(\by_i), \\
    B_\varepsilon(\by_i) \triangleq \by_i + B_\varepsilon &= \{\by \in \mathbb{R}^3 : \|\by - \by_i\| < \varepsilon\}, \label{eq: epsilon ball}
\end{align}
%
%
where $B_\varepsilon = \{\by \in \re^3 : \|\by\| < \varepsilon\}$ is the open  $\varepsilon$-ball centered at the origin.
This reveals that the union of the $\varepsilon$-balls is an over-approximation of the true robot surface. To achieve full-body collision avoidance, rather than enforcing safety directly on this over-approximation, we instead buffer the obstacles by $\varepsilon$, an equivalent and more tractable formulation, which we discuss in the next subsection.



To construct $\Yc$ in practice, we assume a dense point cloud approximating the robot surface is available, as in Figure~\ref{fig: sampling and buffer main figure}.
While this point cloud could serve directly as sample points naturally satisfying the coverage condition \eqref{eq: coverage} at a fine resolution $\delta \ll \varepsilon$, doing so yields a large number of CBF constraints, which becomes computationally prohibitive and potentially infeasible in real time. The point cloud, therefore, is used as an intermediate representation from which a sparser sample set is constructed.
%
In this work, we employ \textit{Poisson disk sampling}~\cite{bridson2007fast} to down-sample the point cloud by enforcing a minimum distance $\varepsilon$ between each sample. The algorithm retains a point only if no previously accepted points lies within this distance  $\varepsilon$, producing a sample set $\Yc$ that satisfies \eqref{eq: coverage} by construction, with approximately uniform surface coverage.

\subsection{Buffered Safe Region}

%
%
The coverage condition \eqref{eq: coverage} guarantees that every unsampled point on the surface is within $\varepsilon$ of a sample. Therefore, given a task-space safe set $\Cc \subset \re^3$ (\ie provided by an occupancy map defining free space), if every sample is kept at least $\varepsilon$ away from the safe boundary $\partial \Cc$, the entire $\varepsilon$-ball around each sample lies inside the safe region $\Cc$---and with it, every unsampled point that the ball covers, as illustrated in Figure~\ref{fig:full body safey and samples}. We enforce this margin by shrinking the safe region $\Cc$ via the Pontryagin difference \cite{bena2025geometry}:
%
%
\begin{align}\label{eq: buffered safe set}
    \Cc_\varepsilon = \Cc \ominus B_\varepsilon,
\end{align}
where $B_\varepsilon$ is the $\varepsilon$-ball centered at the origin \eqref{eq: epsilon ball} so that:
\begin{align}
    \by_i \in \Cc_\varepsilon \implies B_\varepsilon(\by_i) \subset \Cc.
\end{align}
Any point in $\Cc_\varepsilon \subset \Cc$ is at least $\varepsilon$ from the boundary $\partial \Cc$.


\begin{figure}[t]
\centering

\begin{minipage}[t]{0.52\linewidth}
    \vspace{0pt}
    \centering    \includegraphics[width=\linewidth]{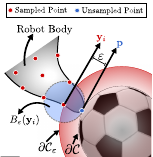}
\end{minipage}
\hfill
\begin{minipage}[t]{0.45\linewidth}
    \vspace{0.3cm}
    \centering
    \small
    \renewcommand{\arraystretch}{2.0}
    \setlength{\tabcolsep}{2pt}
    
    \begin{tabular}{c c c c}
    \hline
    $\varepsilon$ & $N$ & QP (s) & Buffer (s) \\
    \hline
    0.01 & 1699 & 0.0144 & 0.0002 \\
    0.05 & 116  & 0.0037 & 0.0003 \\
    0.10 & 28   & 0.0034 & 0.0007 \\
    0.20 & 10    &0.0034 & 0.0040 \\
     0.50 & 7    &0.0033 & 0.0542 \\

    \hline
    \end{tabular}
\end{minipage}

\caption{\small \textbf{(Left}) Relationship between buffering of the occupancy map by the sampling resolution $\varepsilon$ and full-body safety. \textbf{(Right)} Effect of sampling distance on computational performance, showing number of sample points $N$, QP solve times, and occupancy buffering times over a repeated dynamic experiment. Note $\varepsilon \geq 0.2$ produced some infeasibility due to the large buffering in the restricted workspace.}
\label{fig:full body safey and samples}

\vspace{-6mm}
\end{figure}

\subsection{Poisson Safety Function Synthesis in 3D}

Given the buffered safe region $\Cc_\varepsilon$, our next goal is to construct a time-varying continuously differentiable safety function $\he: \re^3 \times [0,T] \rightarrow \re$ for some $T>0$, whose $0$-superlevel set characterizes the safe set $\Cc_\varepsilon(t)$ for all $t \in [0,T]$.
%
%
%
To this end, we define a non-cylindrical domain $\Oc_{t}$ via the lifting operation \cite{bena2025geometry}:
\begin{align}
    \!\!\!\Occ_{t} \!= \!\bigcup_{t \in [0,T]} \Cce(t) \times \{t\} \subset\R^3\times [0,T],
\end{align}
allowing us to formulate the time parameterized Dirichlet problem for Poisson's equation in the space-time domain:
%
%
\begin{equation}
\label{eq: Q Poisson}
\left \{
    \begin{aligned}
        \frac{\partial^2 \he}{\partial x^2} + \frac{\partial^2 \he}{\partial y^2} + \frac{\partial^2 \he}{\partial z^2} &= f(\by,t) &\forall(\by,t) \in\Oc_{t}, \\
        \he(\by,t) &= 0  &\forall(\by,t) \in\partial\Oc_{t},
    \end{aligned}
\right.
\end{equation}

\noindent 
with $f(\by,t) < 0$ for all $(\by,t) \in\Oc_{t}$.
Solving \eqref{eq: Q Poisson} yields a PSF $(\by, t) \mapsto \he(\by,t)$ that characterizes $\Cce(t)$ according to:
\begin{equation}
\label{eq: Safe Set}
    \Cce(t) = \{\by\in\R^3 : \he(\by,t) \geq 0 \}.
\end{equation}
As demonstrated in \cite{bahati2025dynamic}, 
$\he$ is a CBF for the kinematic model \eqref{eq: velocity model}.
The following subsection establishes that the forward invariance of $\Cce(t)$ with respect to the sample points guarantees full-body collision avoidance on $\Cc(t)$. 


\subsection{Multi-Constraint Safety Filter}

Poisson's equation yields a single globally smooth solution $h_{\varepsilon}$ characterizing safety for the entire environment. Thus, at each time $t \in [0,T]$, each sample point $\by_i \in \Yc \subset \Sc(\bq)$ simply queries the same $h_\varepsilon$,
giving rise to a structured family of $N$ joint-space CBF constraints. Each safety constraint is derived from a single function, evaluated at different points on the robot body in the world frame, $h_\varepsilon(\by_i) \coloneqq h_\varepsilon(\varphi_i(\bq))$. Enforcing this family simultaneously yields the multi-constraint safety filter:
%
%
%
%
%
\begin{align*}
{\bv}_\text{safe} = &\argmin_{\bv \in \re^n} \quad \|{\bv}- {\bv}_{\nom}\|_2^2 \tag{MC-CBF-QP} \\
& \quad \mathrm{s.t.} \quad 
 \dot{h}_{\varepsilon}(\varphi_i(\bq), t,\bv)  \geq - \alpha_i h_{\varepsilon}(\varphi_i(\bq),t),  
\end{align*}
for all $i \in \{1, \cdots N\}$, where $\alpha_i > 0$, 
and 
$\frac{\partial h_\varepsilon}{\partial \bq}(\bq,t)
=\frac{\partial h_\varepsilon}{\partial \by}(\varphi_i(\bq),t) \cdot J_i(\bq)$.     
%
%
The multi-constraint safety filter (MC-CBF-QP) generates safe joint velocities that enforce the forward invariance of $\Cce$.
Forward invariance of $\Cc_\varepsilon$ under this filter ensures that all sample points $\by_i$ remain in $\Cc_\varepsilon$ for all time $t \in [0,T]$. 
In particular, the buffer $\varepsilon$ ensures safety enforced at the samples extends to the full continuous surface. Using the coverage condition, we demonstrate this is sufficient for full-body safety with the following theorem:

\begin{theorem}[Full-body Collision Avoidance]
Let $\mathcal{S}(\bq) \subset \mathbb{R}^3$ be the manipulator surface at configuration $\bq \in \re^n$ and let $\mathcal{Y} = \{\by_1, \dots, \by_N\} \subset \mathcal{S}(\bq)$ be sample points satisfying the coverage condition \eqref{eq: coverage} for some $\varepsilon > 0$, and
%
%
let $\Cc(t) \subset \re^3$ be a time-varying safe set defined by an occupancy map whose interior represents free space and boundary represents obstacle surfaces in dynamic environments. Consider an open ball $B_\varepsilon \subset \re^3$ of radius $\varepsilon$ as in \eqref{eq: epsilon ball} and the reduced safe set via the Pontryagin difference $\Cc_\varepsilon(t) = \Cc(t) \ominus B_\varepsilon$ as in \eqref{eq: buffered safe set}.
%
If the following condition holds:
\begin{align}\label{eq: thm assumption}
    \by_i \in \Cc_\varepsilon(t) \quad \forall \, i \in \{1, \dots, N\},
\end{align}
holds for all $t \geq 0$, then the entire robot surface satisfies:
\begin{align}
    \mathcal{S}(\bq) \subset \Cc(t) \quad \forall \, t \geq 0.
\end{align}
Hence, $\Cc(t)$ is rendered forward invariant with respect to the full robot body.
\end{theorem}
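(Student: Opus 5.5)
The argument is pointwise in time and purely set-theoretic: the dynamics and the safety filter play no role beyond supplying the hypothesis \eqref{eq: thm assumption}. So I will fix an arbitrary $t \geq 0$ and the configuration $\bq = \bq(t)$ at that time, prove $\Sc(\bq) \subset \Cc(t)$ for it, and then conclude for all $t \geq 0$ because $t$ was arbitrary.

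First, I will write out the Pontryagin difference explicitly: $\Cc(t) \ominus B_\varepsilon = \{\by \in \re^3 : \by + \bb \in \Cc(t)\ \forall\, \bb \in B_\varepsilon\}$. With this definition, $\by_i \in \Cc_\varepsilon(t)$ gives $B_\varepsilon(\by_i) = \by_i + B_\varepsilon \subset \Cc(t)$, which is the implication already displayed after \eqref{eq: buffered safe set}.

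Next, take any $\bp \in \Sc(\bq)$.
\begin{itemize}
\item By the coverage condition \eqref{eq: coverage}, the minimum over the finite index set is attained. So there is an index $i$ with $\|\bp - \by_i\| < \varepsilon$.
\item Then $\bb \coloneqq \bp - \by_i \in B_\varepsilon$, since $B_\varepsilon$ is the open ball.
\item Applying the previous step to this $\bb$ gives $\bp = \by_i + \bb \in \Cc(t)$.
\end{itemize}
Since $\bp$ was arbitrary, $\Sc(\bq) \subset \bigcup_i B_\varepsilon(\by_i) \subset \Cc(t)$. This is exactly forward invariance of $\Cc(t)$ with respect to the whole body, in the sense stated in the theorem.

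The only delicate point is bookkeeping rather than analysis. The samples $\by_i = \varphi_i(\bq)$ and the surface $\Sc(\bq)$ both move with the configuration. The hypothesis must therefore be read as: coverage holds at every configuration $\bq(t)$ and \eqref{eq: thm assumption} holds at every time. This is justified because the samples are rigidly attached to the links, so the coverage distances computed in the body frame are preserved under forward kinematics. I will state this explicitly and note that the open ball in \eqref{eq: epsilon ball} matches the strict inequality in \eqref{eq: coverage}, so no boundary case needs separate treatment. Whether the samples actually remain in $\Cc_\varepsilon(t)$ is taken as given by the hypothesis; in practice it is supplied by forward invariance under (MC-CBF-QP) with $\he$ from \eqref{eq: Q Poisson}.
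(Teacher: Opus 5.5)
Your argument is correct and is essentially the paper's proof: fix $t$ and $\bp \in \Sc(\bq)$, use coverage to find $\by_i$ with $\|\bp - \by_i\| < \varepsilon$, then use the definition of $\Cc(t) \ominus B_\varepsilon$ to get $\by_i + B_\varepsilon \subset \Cc(t)$, hence $\bp \in \Cc(t)$. One caveat on your side remark: rigid attachment preserves coverage across configurations only if every surface point is covered by a sample on its own link (e.g., by sampling each link separately), because a point near a joint that is covered only by a sample on the adjacent link can lose coverage as that joint moves, so this should be stated as an assumption rather than derived.
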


\begin{proof}
Fix any $t \geq 0$ and any point $\bp \in \mathcal{S}(\bq)$. By coverage, there exists $\by_i \in \mathcal{Y} \subset\mathcal{S}(\bq) $ with $\| \bp - \by_i \| < \varepsilon$. Since $\by_i \in \Cc_\varepsilon(t) = \Cc(t) \ominus B_\varepsilon$ from the condition \eqref{eq: thm assumption}, by definition of the Pontryagin difference, we have $B_\varepsilon + \by_i  = B_\varepsilon(\by_i) \subset \Cc(t)$. That is, every point within 
distance $\varepsilon$ of $\by_i$ lies in $\Cc(t)$. It then follows from $\| \bp - \by_i \| < \varepsilon$ that $\bp \in \Cc(t)$. As $\bp$ and $t$ were arbitrary, this implies $\mathcal{S}(\bq) \subset \Cc(t)$ for all $t \geq 0$ .
\end{proof}


The above theorem establishes full-body collision avoidance as illustrated in Figure \ref{fig:full body safey and samples}. The sampling resolution  $\varepsilon$ provides a tradeoff between computational efficiency and geometric fidelity---a larger $\varepsilon$ results in fewer CBF constraints but a coarser geometric approximation of the robot surface, and consequently, a larger safety buffer.
%
%
We highlight the relationship between the number of constraints and solve times in Figure \ref{fig:full body safey and samples}; sampling resolutions $\varepsilon \in [0.05, 0.1]$ are used in practice for reliability and real-time solvability.

\begin{remark}(Additional Constraints)
In practice, while the MC-CBF-QP provides safety guarantees against collision avoidance in the environment, increased performance may be desired. For example, with a three dimensional end effector control objective, joint-space velocity commands may not accurately capture the task goal. In other works, the objective is altered to capture Cartesian proximity via Jacobians and component weighting, as in \cite{morton2025safe}. Instead of tuning parameters directly, we leverage a Control Lyapunov function (CLF) to add a further constraint to the MC-CBF-QP. As it is not required for safety, the CLF is added as a soft constraint to bias the filtered command to move in a desired direction \cite{ames_2013_towards}. We define our CLF to keep the end-effector near a desired position:
$V(\bx) = \frac{1}{2}\lVert\bx - \bx_d\rVert^2$,
with $\bx, \bx_d \in \mathbb{R}^3$ the current and desired end-effector position based off the nominal command. Thus, we introduce the constraint $\dot{V}(\bx) \leq -\gamma V(\bx) + \delta$, with $\delta$ a linear optimization variable in the objective function. This could be replaced with another CLF designed to preference the user's priorities for the given task. Additionally, joint limits, encoded as half-space constraints $h_j^+(\bq) = \bq_j - \bq_{j,\min}$ and $h_j^-(\bq) = \bq_{j,\max} - \bq_j$, are appended directly to the QP at negligible computational cost, which prevent the robot from being driven to a joint angle out of its range. Further constraints could be added to avoid all forms of self collision  \cite{SingletaryMolnarSafetyCriticalFood}.

\end{remark}

\section{Hardware Experiments}

\subsection{Experimental Setup}

\begin{figure*}[t!]
\centering
    \includegraphics[width=\textwidth]{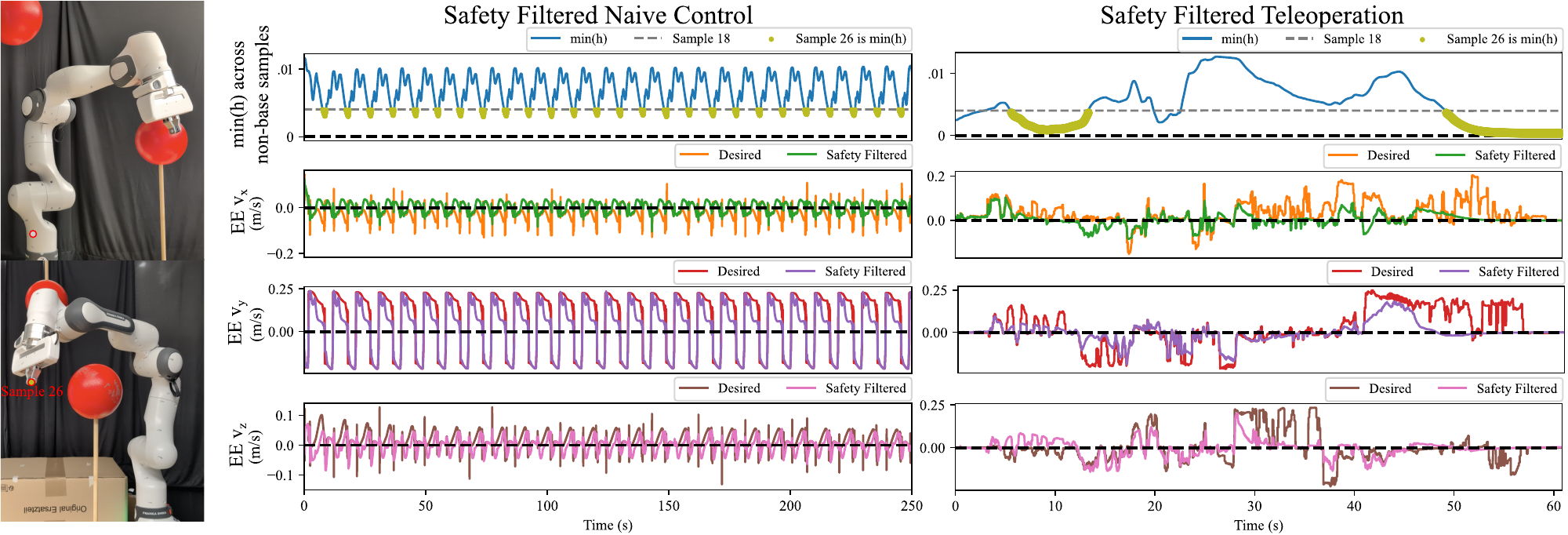}
    \caption{\small Sample experimental runs with a static environment consisting of boxes and spheres. \textbf{(Left)} Front \& side view of sample pose, with key body samples labeled. Sample 18, on the base, is close to the boundary, and has a consistently low $h$ value. Sample 26 attains the lowest $h$ value across both experiments, and is also highlighted. \textbf{(Top)} Minimum $h$ across non-base samples, and highlights when sample 26 has the lowest $h$. \textbf{(Bottom)} Commanded vs. filtered end effector velocities in $\mathbb{R}^3$ (transformed using desired and filtered joint velocities using forward kinematics). These plots show that filtering keeps the entire body safe, while not being overly restrictive of motions. }
    \label{fig:static_fig_control_filters}
    \vspace{-4mm}
\end{figure*}

We validate the proposed framework on a Franka Emika FR3, a seven degree of freedom (7-DOF) robotic arm, in both static and dynamic environments. The safety filter operates on joint velocity commands at $50 - 100$ Hz. The FR3 workspace is limited to a cube spanning $(-1.0, 1.0) \cross (-1.0, 1.0) \cross (0.0, 2.0)$ meters with the origin at the FR3 base frame. This workspace is uniformly discretized into a $100 \cross 100 \cross 100$ voxel grid.

Body surface points are obtained from the URDF geometric approximation of the FR3, visualized in Fig. \ref{fig: sampling and buffer main figure} \cite{todorov2012mujoco}. The sample set $\Yc$ is chosen as described in Section $3$. The sampling distance used for static and dynamic experiments was $\varepsilon = 0.1$ resulting in $30$ sample points across the robot body. Each sample point is defined in the local link frame, with base frame representation calculated at each control step via forward kinematics from the FR3's joint configuration. 

\begin{figure*}[!t]
\centering
\includegraphics[width=\textwidth]{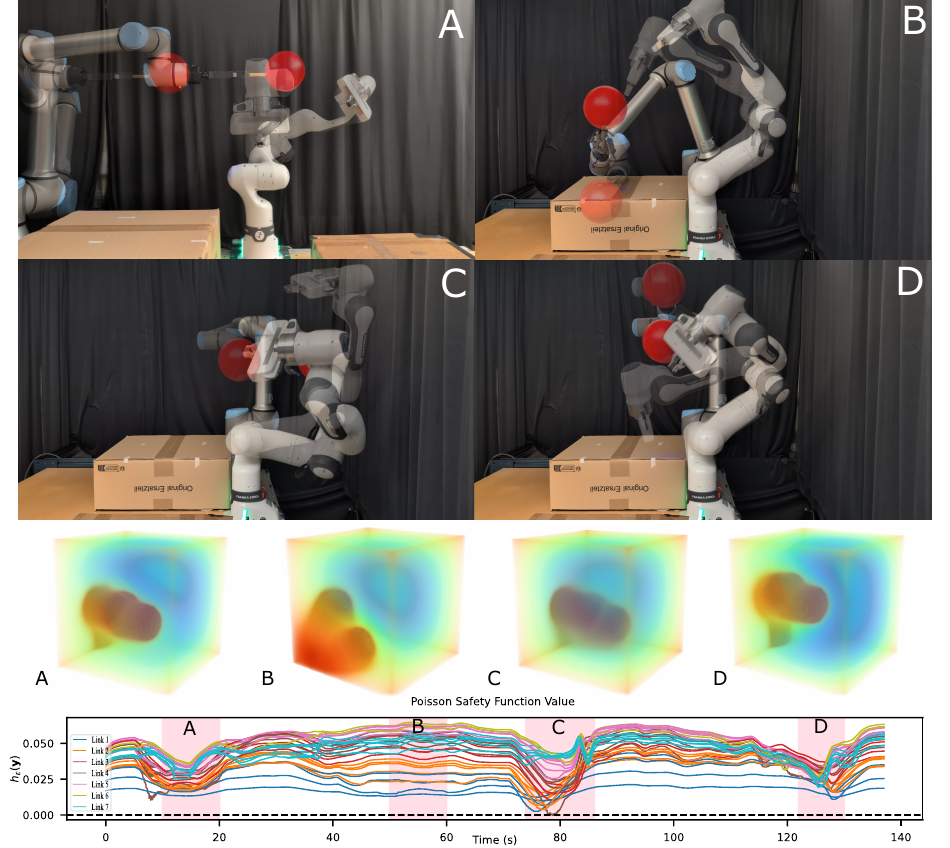}
\caption{\small Hardware validation in a dynamic environment. \textbf{(Top)} We teleop the UR10e  into the FR3's workspace. Panels A,B,C and D show different movements along the experiment.\textbf{(Middle)} Poisson Safety function $h_\varepsilon$. The heatmaps depict the 3D Poisson safety Function at the highlighted times. \textbf{(Bottom)} The values of the Poisson safety function evaluated at all $30$ sample points; it remains strictly positive demonstrating the safe behavior of the FR3 in an unpredictable, dynamic environment. Note the plot is grouped by the link number, and observe how different sample points decrease dependent on the UR10e's movements. }
\label{dynamic_fig}
\vspace{-6mm}
\end{figure*}
The occupancy grid over which Poisson is solved must encode only fully free voxels at least $\varepsilon + \delta$ away from obstacles as open. An initial grid, with obstacles fully contained in the occupied regions, is consequently buffered by $\varepsilon + \delta$, with $\delta$ negligible in the following experiments. Buffering can be parallelized and only requires linear complexity, and is thus not a computational bottleneck for reasonable $\varepsilon$, see Figure \ref{fig:full body safey and samples}. The Poisson safety function is computed using a successive over relaxation (SOR) method \cite{young1950iterative} with an alternating checkerboard scheme. Coupled with a warm start from the previous solve, the numerical PDE solve time is on average $0.002$ s across static and dynamic environments.  The occupancy map composition, buffering, and Poisson safety filter synthesis are executed on a NVIDIA RTX GeForce 5090. 
The multi-constraint quadratic program was solved, on average, in $0.003$s using OSQP \cite{stellato2020osqp}. 

The state measurements utilized in the Poisson synthesis introduce uncertainty due to latency and encoder error. To account for this, we enforce an input-to-state safe control barrier function (ISSf - CBF) in our MC-CBF-QP \cite{kolathaya_2019_inputtostate}. We introduce an adaptive robustness term into the CBF constraint $\frac{\partial \he}{\partial \by} + \frac{\partial \he}{\partial t} + \alpha_i h_\varepsilon \geq \varepsilon_0\|\frac{\partial \he}{\partial \by}\|^2$, with $\varepsilon_0 > 0 $; this accounts for uncertainty without over-conservatism 
\cite{da_2026_safe}.
\subsection{Static Environment}
Robotic manipulators are often required to perform tasks in cluttered environments \cite{morgan2019benchmarking}. When executing pick-and-place operations or more generally manipulating objects, robots must safely navigate such environments while executing a nominal closed-form controller or deploying a learned policy \cite{articlediff}. We thus evaluate our proposed framework in a range of static environments and controllers. The obstacles are assumed to be known, prescribed a priori or obtained from perception data. As the obstacles are fixed, the occupancy grid and resulting Poisson Safety function are obtained once at initialization and queried throughout. 

In static experiments, we test our safety filter on both naive and adversarial controls. For adversarial commands, the FR3 is teleoperated via joint velocity commands at $100$Hz to track desired end effector cartesian positions. The user commanded nominal controller generates trajectories into obstacles in the environment, attempting to violate safety and drive the robot into unsafe regions where $h(\mathbf{y}_i) \leq 0$ at one or more sampled points $\by_i$. These nominal commands are passed through the safety filter, returning safe joint velocity commands. In Figure \ref{fig:static_fig_control_filters}, the minimum across $h(\by_i)$ is plotted alongside desired vs. filtered end effector velocity commands. It is clear that the Poisson Safety function is strictly positive for the sampled set of points and preserves safety of the entire robotic arm in spite of adversarial commands. Additionally, a simple controller
pushes the end effector to visit points A and B in the robot workspace. Over 25 back-and forth-traversals are completed in under 4 minutes while maintaining safety, showing the filtering is not overly conservative. 

\subsection{Dynamic Environment}

During manipulation tasks, beyond avoiding the static environment, it is desired that the robot can react safely to dynamic objects outside the system's control, with unknown intentions, such as a human or another robot entering the workspace. We validate our proposed safety filter framework in a live, dynamic environment and assess the real time performance of the full-body collision avoidance.

A spherical object is rigidly attached to a UR10e, a 6-DOF robotic arm, to serve as the dynamic obstacle for the FR3. The UR10e is teleoperated, freedriven, or executing a nominal controller, in real time and guided into the FR3 workspace to induce potential collisions, and the FR3 is only considered safe if every link on the FR3 does not collide with any portion of the UR10e or its spherical object. The dynamic motion is unknown to the FR3 beyond state measurements, and is not pre-planned during teleoperation or freedrive mode. The UR10e joint state is sampled at $100$Hz and, using forward kinematics and a simplified robot model, is used to build the occupancy grid as the UR10e moves throughout the workspace. The occupancy grid thus contains the entire UR10e geometry as well as the attached ball. During each control cycle the occupancy grid is updated according to the current state and the Poisson safety function is computed online, in real time, at an average of $100$Hz. In these experiments, the FR3's goal is to maintain a fixed joint position, using proportional control, which the safety filter augments to send safe joint velocity commands.

\begin{figure}
    \centering
    \includegraphics[width=\linewidth]{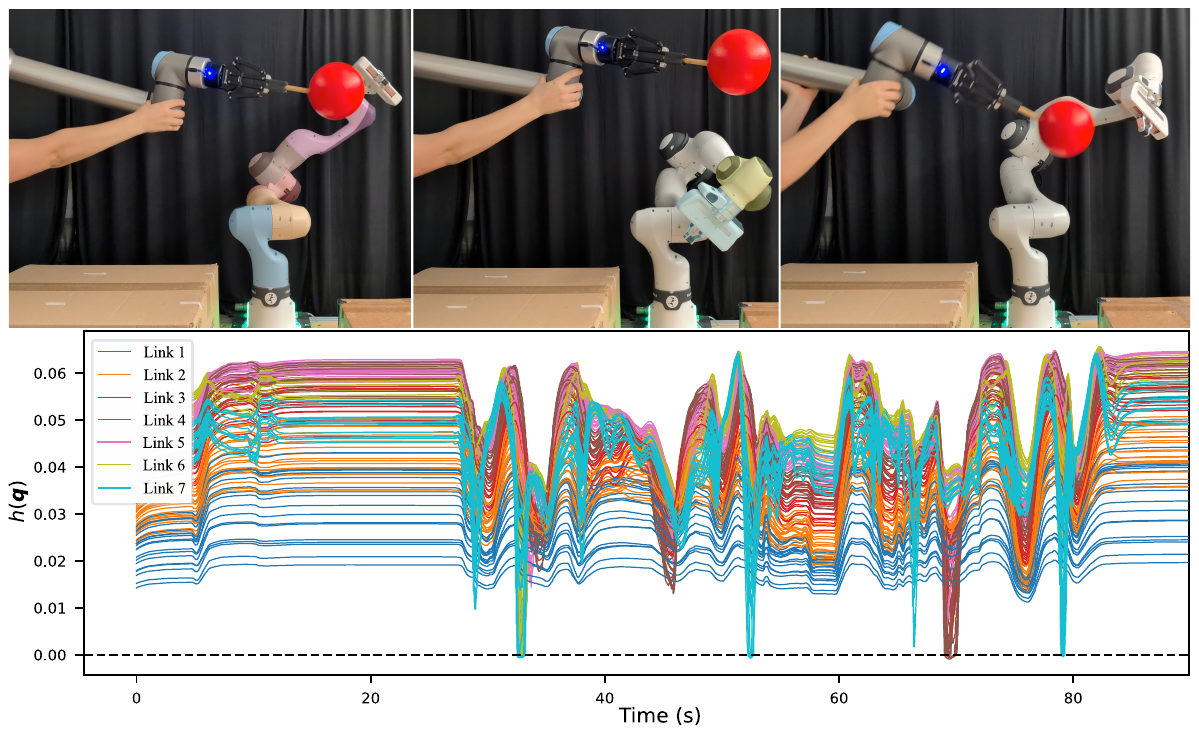}
    \caption{\small In freedrive,  a human moves the UR10e at higher speeds into the FR3's workspace. In this run $\varepsilon = 0.05$ with $121$ samples on the robot's body, colored by link. The FR3 is able to maintain safety of the entire sample set as well as demonstrate effective collision avoidance in unpredictable human-driven obstacle motions. }
    \label{fig:human_freedrive}
    \vspace{-6mm}
\end{figure}
The UR10e was commanded into the FR3's workspace from a range of positions and angles to fully test the full body collision avoidance. Across these trials, the FR3 was able to execute full body collision avoidance in real-time. Figure \ref{dynamic_fig} shows an example obstacle trajectory teleoped live and the FR3 is able to successfully avoid the obstacle with the Poisson safety function positive consistently across sample points. It can also handle faster, more unpredictable motions, as seen in Figure \ref{fig:human_freedrive}, where a human drives the UR10e into the workspace. Note effective collision avoidance is restricted  to bounded obstacle velocities relative to the control rate and FR3 speed and acceleration limits. Additionally, given that the FR3 is a serial-chain manipulator, initial links lack sufficient degrees of freedom to guarantee avoidance. For example, no control action can keep the base safe from obstacles approaching the base-frame origin.


\section{Conclusion}

In this work, we have presented a framework for full-body collision avoidance for manipulation.
By sampling the robot surface at a prescribed resolution and buffering the environment accordingly, a single smooth Poisson Safety Function, constructed directly from onboard sensing, yields a structured family of CBF constraints suitable for real-time safety filtering.
We validate the proposed framework on a $7$-DOF manipulator in both static and dynamic environments. Future work includes dynamically synthesizing occupancy maps from 3D perception data, incorporating semantic reasoning for context-aware safety applications and adaptive sampling techniques for improved computational efficiency. 

\vspace{2mm}
\noindent\textbf{Acknowledgments} - This research was supported by The Dow Chemical Company through project \#227027AW.
ChatGPT 5.2 was used as an aid to develop helper functions in the code.

\addtolength{\textheight}{-12cm}   





\bibliographystyle{IEEEtran}

\bibliography{main-GB}

\end{document}